\RequirePackage{amsmath}
\RequirePackage{amssymb}
\RequirePackage{amsfonts}

\documentclass[]{fairmeta}

\usepackage{amsthm}
\newtheorem{proposition}{Proposition}

\setcitestyle{numbers,square,comma}

\graphicspath{{figures/}}

\title{Distillation of Synthetic Data for Time Series Foundation Models}

\author[1]{Niloy Biswas}
\author[1]{Noureddine El Karoui}

\affiliation[1]{Meta AI}

\abstract{Time series foundation models (TSFMs) are increasingly pre-trained on synthetically generated time series
trajectories, where the data generating process is known. Current pre-training recipes are based on loss objectives
which compare TSFM outputs to realized future values of each trajectory.
We instead propose loss objectives which compare TSFM outputs to the conditional forecast distribution of each trajectory,
a procedure we call synthetic data distillation (SDD).
SDD corresponds to a Rao-Blackwellization of the training objective, in that it leaves the expectation of stochastic gradients
unchanged while provably reducing the covariance of the stochastic gradient under the Loewner partial ordering.
We empirically validate SDD on a TSFM model family of sizes from $4$M to $2.5$B parameters, and observe faster convergence of validation loss at every model size:
on Gaussian Process data, SDD attains or improves upon the Status Quo loss whilst requiring $10\%-40\%$ less training iterations.}
\correspondence{\email{niloy@meta.com}} 
\code{\url{https://github.com/niloyb/sdd}}

\begin{document}

\maketitle
\section{Synthetic data for Time Series Foundation Models pre-training}

Time series foundation models (TSFMs) are large neural networks pre-trained on a diverse corpus of time series data across different domains to perform time series forecasting.
TSFMs process numeric historical time series values as context \citep{nie2023patchtst,vaswani2017attention}, similar to how text tokens are processed as context by Large Language Models. They then predict future time series values on time series data unseen in the training process. TSFMs 
enable zero-shot forecasting and are an active research area \citep{das2024timesfm, chronos2, tirex2, toto2}.

Compared to text, publicly available real-world time series are scarce.
This has catalyzed efforts to develop synthetic data generation techniques for TSFM pre-training \citep{dooley2023forecastpfn,ansari2024chronos,das2024timesfm,xie2025cauker,moroshan2025tempopfn}, 
and synthetic datasets are now a core component of state-of-the-art TSFMs \citep{dooley2023forecastpfn, moroshan2025tempopfn, chronos2, das2024timesfm, moirai2, tirex2, toto2}.

\paragraph{Our Contributions.} We introduce synthetic data distillation (SDD), which is motivated by the literature on \textit{dataset distillation} \cite{wang2018dataset}. 
SDD condenses data from infinitely many synthetic time series trajectories into a single loss calculation without generating the trajectories. 
SDD accelerates the pre-training of TSFMs by reducing the variance of the stochastic gradients during training. 
Section \ref{sec:sdd} introduces SDD and Section \ref{sec:experiments} highlights its empirical benefits.
Our work participates in the wider effort to pre-train structured data foundation models using synthetic data
\citep{muller2022pfn, dooley2023forecastpfn, toto2, moroshan2025tempopfn, Hollmann2025}.

\section{Distillation of Synthetic Data for Time Series Foundation Models} \label{sec:sdd}
\subsection{Status Quo uses a single realized trajectory as the target during pre-training} \label{sec:status}
Let $f_{\theta}$ denote a TSFM with weights $\theta$. Current pre-training is based on loss functions of the form
\begin{equation}
\label{eq:cpm}
 \frac{1}{B} \sum_{b=1}^{B} \ell\big(f_\theta(y^{(b)}_{0:t}),\, y^{(b)}_{t+1:t+h}\big)
\end{equation}
where $\ell$ is the loss function, $B$ is the batch size, $h$ is the forecast horizon,
$y^{(b)}_{0:t}$ and $y^{(b)}_{t+1:t+h}$ are the historical and future time series values of the $b^{th}$ sample respectively.
Common loss functions $\ell$ include mean squared error (MSE), pinball (a.k.a.\ quantile) loss and cross-entropy loss when TSFMs output point forecasts, quantile forecasts, and a categorical distribution of bins respectively.

\subsection{Synthetic Data Distillation uses the exact conditional distribution of future trajectories}
This section develops Synthetic Data Distillation (SDD). Consider a single time series trajectory $(y_{t})_{t \geq 0}$ as in \eqref{eq:cpm}, where we drop the batch index notation for clarity. This trajectory is generated from some data-generating process $\pi_\alpha$ with hyperparameters $\alpha$. For real-world data, $\pi_\alpha$ is unknown; for synthetic data, $\pi_\alpha$ is from some pre-specified prior distribution \citep{muller2022pfn} and known at generation time. For example, $\pi_\alpha$ might correspond to a Gaussian process and $\alpha$ its mean and kernel functions.

In status quo pre-training, the loss function \eqref{eq:cpm} uses the realized future values $y_{t+1:t+h}$ as the ground truth during training, where $y_{t+1:t+h}$ is a single realized trajectory from the conditional distribution of $\pi_\alpha$ given observed history $y_{0:t}$. For a large class of synthetically generated time series, this conditional distribution is known and analytically tractable.
SDD proposes to make full use of this conditional distribution, instead of just utilizing a single realized trajectory $y_{t+1:t+h}$ of future values.

\paragraph{A stylized example.} Figure \ref{fig:status_quo_sdd} compares Status Quo and SDD on a stylized example of a point forecast TSFM. Status Quo loss is equal to MSE of the TSFM forecast $f_\theta(y_{0:t})$ with a single realized future trajectory $y_{t+1:t+h}$ as the ground truth. SDD loss is equal (up to a constant that does not depend on $\theta$, see Table \ref{tab:distilled-general}) to MSE with the conditional mean of future trajectories $\mathbb{E}_{\pi_\alpha}[Y_{t+1:t+h} | y_{0:t}]$ as the ground truth. This conditional mean is known exactly for a large class of synthetic data generators. SDD distills the entire conditional distribution of future trajectory into a single loss calculation, and reduces the variance of the loss function compared to the Status Quo ground truth which is based on just a single realization of future values.
It is numerically equivalent to generating infinitely many $y_{t+1:t+h}$ given $y_{0:t}$ and calculating mean of $\ell\big(f_\theta(y_{0:t}),\, y_{t+1:t+h}\big)$.
As the integration can be done in closed form, none of the samples actually need to be generated.
\begin{figure}[ht]
  \centering
  \includegraphics[width=\linewidth]{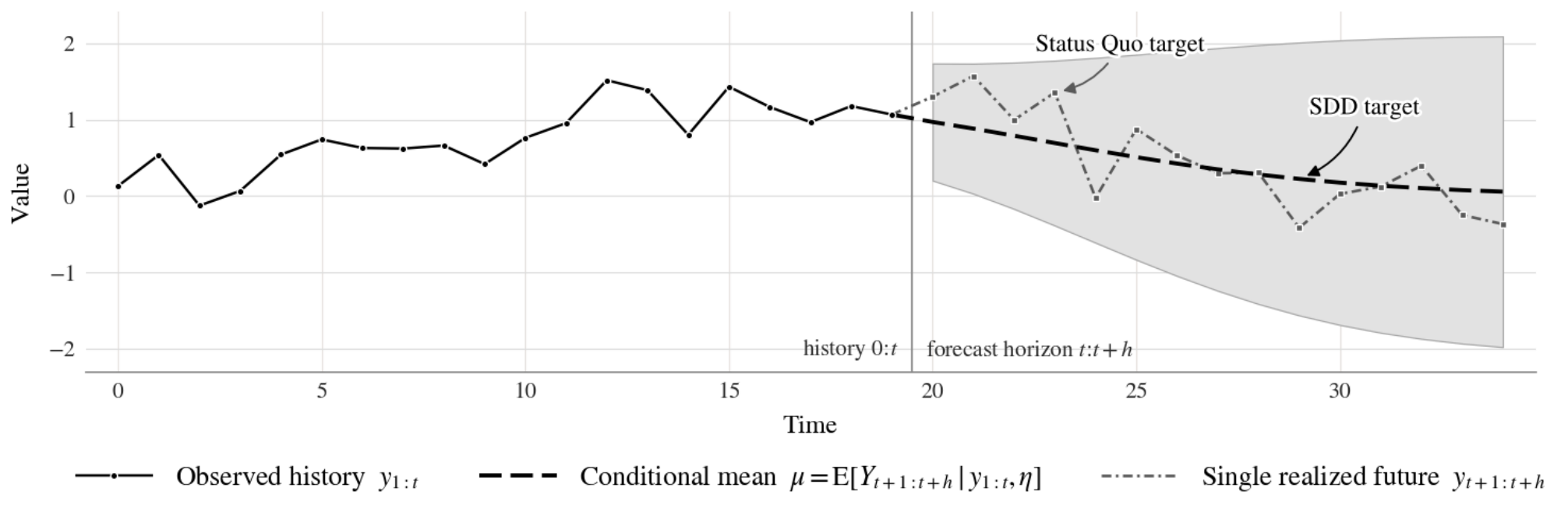}
  \caption{Comparison of Status Quo and SDD on a single trajectory for training a point forecast TSFM with MSE loss. Status Quo calculates MSE with realized future trajectory $y_{t+1:t+h}$ as the ground truth. SDD calculates (up to an additive constant independent of $\theta$) MSE with the conditional mean of future trajectory $\mathbb{E}_{\pi_\alpha}[Y_{t+1:t+h} | y_{0:t}]$ as the ground truth.
  }
  \label{fig:status_quo_sdd}
\end{figure}
\subsection{General methodology and implementation considerations}
\label{sec:method}
We now develop SDD in greater generality for any loss function. Consider forecast error loss of a single time series trajectory $(y_{t})_{t \geq 0}$ as in \eqref{eq:cpm}. Note that
\begin{equation}
\label{eq:cpm-distill}
\mathbb{E}_{\pi_\alpha} \big[ \ell\big(f_\theta(y_{0:t}),\, y_{t+1:t+h}\big) \big] =
\mathbb{E}_{\pi_\alpha} \big[ \mathbb{E}_{\pi_\alpha} \big[\ell\big(f_\theta(y_{0:t}),\, y_{t+1:t+h}\big) \big| y_{0:t}\big] \big].
\end{equation}
For many common synthetic data generating processes and many loss functions, the \emph{distilled loss}
\begin{equation}
\label{eq:sdd-loss}
\ell_{distill}^{(\pi_\alpha, h)} \big( \theta, y_{0:t} \big)
 := \mathbb{E}_{\pi_\alpha} \big[ \ell\big(f_\theta(y_{0:t}),\, y_{t+1:t+h}\big) \,\big|\, y_{0:t} \big]
\end{equation}
is analytically tractable. Synthetic Data Distillation proposes to replace $\ell$ with $\ell_{distill}$ in \eqref{eq:cpm}.
By \eqref{eq:cpm-distill}, $\ell_{distill}$ has the same expectation as $\ell$ with the sampling
noise of the realized future integrated out.
Table \ref{tab:distilled-general} contains expressions for the distilled loss $\ell_{distill}$ for some common loss functions.
\begin{table}[t] \centering
\small
\begin{tabular}{@{}lll@{}} \toprule Loss $\ell$ & Realized form & Distilled form $\ell_{distill}$ \\ \midrule Squared error & $(\hat y_{t+i} - Y_{t+i})^2$ & $(\hat y_{t+i} -\mu)^2 + s^2$ \\[2pt]
Absolute error & $|\hat y_{t+i} - Y_{t+i}|$ & $s\!\left[\,2\,m(z) + z\big(2F_0(z)-1\big)\right]$ \\[2pt]
Pinball at level $\tau$ & $(Y_{t+i}-\hat y_{t+i})\big(\tau-\mathbf 1\{Y_{t+i}<\hat y_{t+i}\}\big)$ & $s\!\left[\,m(z) + z\big(F_0(z)-\tau\big)\right]$ \\[2pt]
Cross-entropy over bins & $-\log \hat p(\mathrm{bin}(Y_{t+i}))$ & $-\sum_k \mathbb{P}(Y_{t+i}\in\mathrm{bin}_k)\,\log \hat p_k$ \\ \bottomrule
\end{tabular}
\caption{
$\hat y_{t+i}$ is the TSFM prediction; $Y_{t+i}$ is the random variable of the trajectory at time $t+i$; $\mu=\mathbb{E}_{\pi_\alpha}[Y_{t+i} | y_{0:t}]$ is the conditional mean of $Y_{t+i}$ given $y_{0:t}$; $s>0$ is the conditional standard deviation; and $z=(\hat y_{t+i}-\mu)/s$. The standardized variable $Z=(Y_{t+i}-\mu)/s$ has density $f_0$ and CDF $F_0$, and upper partial mean $m(z)=\int_z^{\infty}u\,f_0(u)\,du$. $\hat p_k$ is predicted probability of bin partitions. 
}
\label{tab:distilled-general}
\end{table}

\paragraph{Tractability.} SDD requires the knowledge of the quantities such as mean, variance and CDF of conditional distribution of the multi-step future $Y_{t+1:t+h}$ given the history $y_{0:t}$.
Note that the losses of Table~\ref{tab:distilled-general} are
sums of per-time-point terms, so only the \emph{marginal} conditional law of each $Y_{t+i}$ is
needed, not the joint law across the horizon. A large class of synthetic time series generators are tractable in this sense, including: (i) Gaussian Processes (GPs); (ii) Linear Gaussian State Space Models (e.g.\ ARIMA, Dynamic Linear Models); (iii) Linear and Log-Linear Stochastic Differential Equations (e.g.\ Ornstein--Uhlenbeck processes and Geometric Brownian Motion), and  (iv) all independent non-identically distributed (i.n.i.d.) time series $(y_t)_{t \geq 0}$ of the form
$y_t := g(t, \epsilon_t)$, where $g$ is a known deterministic function and the $\epsilon_t$ are
independent noise.
Appendix~\ref{app:closedforms} gives the distilled loss 
for each of these families and each loss of Table~\ref{tab:distilled-general}.
Two examples of intractable synthetic data generators are structural causal models with non-linear activations \cite{xie2025cauker} and
nonlinear State Space Models. 
Note that GPs and i.n.i.d.\ time series (e.g.\ with change-points, sawtooths or spikes-based deterministic functions and additive noise) already capture most of the current synthetic data generators used in TSFM pre-training \citep{moroshan2025tempopfn, toto2}. Appendix
\ref{app:generators} reviews and classifies all existing synthetic generators. 

For intractable synthetic data generators, it is possible to calculate Monte-Carlo estimates of $\ell_{distill}$ by sampling many trajectories of $Y_{t+1:t+h}$ given history $y_{0:t}$ for each $t$. This provides a balance between Status Quo (just a single trajectory and higher variance) and SDD (numerically equivalent to infinite trajectories and lower variance, but requires tractability).
When some of the pre-training data is intractable (e.g. real-world data), we can also fallback to the status quo loss just for the intractable time series. 
This means SDD remains useful whenever some portion of the pre-training corpus are from tractable synthetic data generators.

\paragraph{Implementation considerations for efficient pre-training.} Teacher-forcing \cite{das2024timesfm} or contiguous patch masking (CPM) \cite{moirai2, chronos2, toto2} are commonly used for TSFM pre-training. For time series of length $T$, the SDD teacher-forcing loss is given by $ \frac{1}{T-h-s+1}\sum_{t=s}^{T-h} \ell_{distill}^{(\pi_\alpha, h)} \big( \theta, y_{0:t} \big)$ for some minimum history threshold $s \geq 0$; the SDD CPM loss is given by
$\frac{1}{M} \sum_{i=1}^M \ell_{distill}^{(\pi_\alpha, h_i)} \big( \theta, y_{0:s_i} \big)$, where $(s_i+1,s_i+h_i)_{i=1}^M$ are the randomly sampled masked periods. To implement SDD efficiently, we can calculate the conditional distribution quantities in an auto-regressive manner when generating the synthetic data. For example, when generating and saving $y_{1:t}$, we can also save the quantities such as mean, variance of $Y_{t+1:T}$ given $y_{0:t}$. 
Then after the pre-training data is cached, SDD and Status Quo cost the same per training iteration.

\subsection{Variance Reduction Guarantees}
\label{sec:theory}

Denote $L(\theta) = \ell\big(f_\theta(y_{0:t}), y_{t+1:t+h}\big)$ and
$L_{distill}(\theta) = \ell_{distill}^{(\pi_\alpha, h)} \big( \theta, y_{0:t} \big)$ for the status quo and distilled loss respectively. Proposition~\ref{prop:rb} establishes a Rao--Blackwellization result \citep{rao1945, blackwell1947} for SDD. 

\begin{proposition}[Rao--Blackwellization]
\label{prop:rb}
Suppose $L(\theta)$ is differentiable in $\theta$ with $\mathbb{E}_{\pi_\alpha}\lVert\nabla_\theta L(\theta)\rVert^2 < \infty$,
and that differentiation and conditional expectation may be interchanged, so that
$\nabla_\theta L_{distill}(\theta) = \mathbb{E}_{\pi_\alpha}[\nabla_\theta L(\theta) \mid y_{0:t}]$. Then $\mathbb{E}_{\pi_\alpha}[\nabla_\theta L_{distill}(\theta)]=\mathbb{E}_{\pi_\alpha}[\nabla_\theta L(\theta)]$, and, in the Loewner partial ordering on positive semi-definite matrices,
\begin{equation}
\label{eq:cov}
\operatorname{Cov}_{\pi_\alpha}[\nabla_\theta L_{distill}(\theta)]
=\operatorname{Cov}_{\pi_\alpha}[\nabla_\theta L(\theta)]
 - \mathbb{E}_{\pi_\alpha}[\operatorname{Cov}_{\pi_\alpha}(\nabla_\theta L(\theta) \,|\, y_{0:t})]
 \preceq \operatorname{Cov}_{\pi_\alpha} [\nabla_\theta L(\theta)].
\end{equation}
\end{proposition}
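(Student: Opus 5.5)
The plan is to recognize this as the law of total expectation and the law of total covariance, applied to the random vector $G := \nabla_\theta L(\theta)$ with conditioning $\sigma$-algebra generated by the history $y_{0:t}$. By hypothesis, $\nabla_\theta L_{distill}(\theta) = \mathbb{E}_{\pi_\alpha}[G \mid y_{0:t}] =: \bar G$. The square-integrability assumption $\mathbb{E}_{\pi_\alpha}\lVert G\rVert^2<\infty$ guarantees that $G$, and hence $\bar G$, has a finite mean and covariance. For $\bar G$ this follows from conditional Jensen: $\lVert \bar G\rVert^2 \le \mathbb{E}[\lVert G\rVert^2 \mid y_{0:t}]$. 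The conditional covariance $\operatorname{Cov}(G\mid y_{0:t})$ is then also well defined and integrable.

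First, the unbiasedness claim follows immediately from the tower property:
\[
\mathbb{E}[\bar G]=\mathbb{E}\big[\mathbb{E}[G\mid y_{0:t}]\big]=\mathbb{E}[G].
\]
This is exactly the gradient analogue of \eqref{eq:cpm-distill}.

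Second, for the covariance identity I would write $\mu:=\mathbb{E}[G]$ and decompose
\[
G-\mu=(G-\bar G)+(\bar G-\mu).
\]
Expanding $(G-\mu)(G-\mu)^\top$ produces two square terms and two cross terms. To handle the cross term $\mathbb{E}[(G-\bar G)(\bar G-\mu)^\top]$, condition on $y_{0:t}$. Because $\bar G-\mu$ is $y_{0:t}$-measurable, it can be pulled out of the conditional expectation. What remains is $\mathbb{E}[G-\bar G\mid y_{0:t}]=0$, so the cross term vanishes; the same argument applies to its transpose. Finiteness of second moments, via Cauchy--Schwarz, justifies pulling out the measurable factor. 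For the first square term, the tower property gives
\[
\mathbb{E}[(G-\bar G)(G-\bar G)^\top]=\mathbb{E}[\operatorname{Cov}(G\mid y_{0:t})].
\]
Combining the pieces yields
\[
\operatorname{Cov}(G)=\operatorname{Cov}(\bar G)+\mathbb{E}[\operatorname{Cov}(G\mid y_{0:t})],
\]
and rearranging gives the equality in \eqref{eq:cov}.

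Finally, for the Loewner inequality, observe that for every vector $v$ we have $v^\top\operatorname{Cov}(G\mid y_{0:t})v=\mathbb{E}[(v^\top(G-\bar G))^2\mid y_{0:t}]\ge 0$ almost surely. Taking expectations shows that $\mathbb{E}[\operatorname{Cov}(G\mid y_{0:t})]$ is positive semi-definite, and subtracting a PSD matrix can only decrease $\operatorname{Cov}(G)$ in the Loewner order.

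None of these steps is genuinely hard. The only point needing care is the measure-theoretic bookkeeping: making sure the conditional covariance is integrable and that the cross term vanishes, both of which come from the assumed finite second moment. The interchange of differentiation and conditional expectation, which would otherwise be the real obstacle, is assumed in the statement, so I will not try to verify it.
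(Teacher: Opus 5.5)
Your proposal is correct and follows the paper's proof. The tower property gives unbiasedness, the law of total covariance gives the identity, and positive semi-definiteness of $\mathbb{E}[\operatorname{Cov}(G\mid y_{0:t})]$ gives the Loewner inequality; the only difference is that you derive the total-covariance step explicitly via $G-\mu=(G-\bar G)+(\bar G-\mu)$ and add the conditional-Jensen and Cauchy--Schwarz integrability checks, which the paper leaves implicit.
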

\begin{proof}
By tower property $\mathbb{E}_{\pi_\alpha} [ \nabla_\theta L_{distill}(\theta) ]=\mathbb{E}_{\pi_\alpha}[\mathbb{E}_{\pi_\alpha}[\nabla_\theta L(\theta) \mid y_{0:t}] ]=\mathbb{E}_{\pi_\alpha}[\nabla_\theta L(\theta)]$, and \eqref{eq:cov} is the law of total covariance applied to $\nabla_\theta L(\theta)$ given $y_{0:t}$, since
$\mathbb{E}_{\pi_\alpha}[\nabla_\theta L(\theta) \mid y_{0:t}] = \nabla_\theta L_{distill}(\theta)$. The subtracted term is an expectation of covariance matrices and hence positive semi-definite \cite{bhatia1997matrix}.
\end{proof}
Proposition~\ref{prop:rb} shows that SDD produces unbiased and lower variance gradients for back-propagation during TSFM pre-training. 
This in turn yields faster convergence, as next highlighted in Section \ref{sec:experiments}.

\section{Numerical Experiments}
\label{sec:experiments}

We pre-train the five Toto-2 architectures \citep{toto2}, spanning $4$M to $2.5$B parameters, from random initialization on trajectories from univariate Gaussian processes of length $T=512$. In this setting, the conditional forecast distributions are Gaussian and the distilled losses of Table~\ref{tab:distilled-general} are analytically tractable. Every model carries a quantile head over the nine deciles and is pre-trained with contiguous patch masking under pinball loss: the Status Quo arm scores the realized future values while the SDD arm scores the distilled pinball loss of Table~\ref{tab:distilled-general}. We train for $300{,}000$ steps at batch $B=16$ in a single-pass regime, such that no trajectory is revisited during training, and calculate the next patch
(corresponding to the next $P=32$ time-points) Continuous Ranked Probability Score (CRPS) on held-out time series to track the validation loss of Status Quo and SDD. 

Figure \ref{fig:flops-crps} plots that validation loss against cumulative training compute, estimated as $6ND$ for $N$ parameters and $D$ patch tokens, so that a model's own trajectory and the envelope across model sizes can be read on one axis. 
Here a patch is the TSFM's token: each of the $T/P = 16$ patches of a series is one
sequence position, so $D$ advances by $B \cdot T/P = 256$ per optimizer step. 
The conditional distribution moments needed for SDD are cached beforehand when the synthetic data is generated, so during training SDD and Status Quo steps cost the same
(Section~\ref{sec:method}). 
Figure \ref{fig:flops-crps} shows that SDD attains the same validation loss as Status Quo while spending $38$-$46\%$ fewer FLOPs, a convergence speed-up of $1.6\times$ to $1.85\times$ depending on model size. It also shows that SDD consistently attains lower validation loss than Status Quo given the same training compute.
Appendix~\ref{app:details} details the data generation and training run configurations, and Appendix~\ref{app:extra} reports additional experiments under teacher forcing and with varying observation noise, batch size and sequence length.

\begin{figure}[ht]
  \centering
  \includegraphics[width=\linewidth]{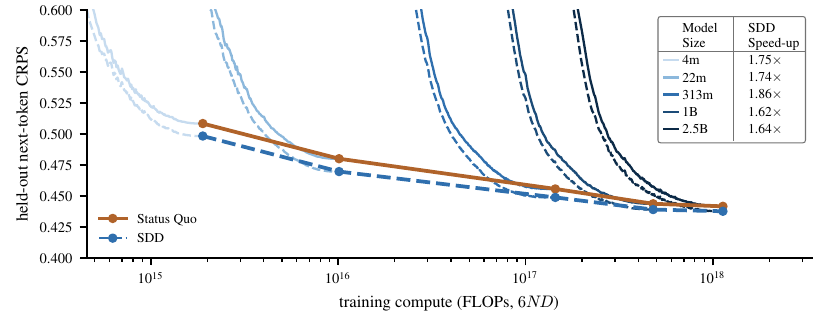}
  \caption{Held-out next-token CRPS of Status Quo and SDD against training compute FLOPs, 
  for the five Toto-2 architectures \citep{toto2} trained from random initialization. 
  Each colored line is one model's compute curve, solid for Status Quo and
  dashed for SDD. The two heavy marked lines join the end points across sizes. 
  \emph{SDD speed-up} is $300{,}000$ divided by the number of iterations required by SDD to attain the $300{,}000^{\text{th}}$ step Status Quo loss. E.g. for the $1$B model SDD attained the Status Quo loss in $185{,}000$ training steps, giving a $\frac{300{,}000}{185{,}000} \approx 1.62\times$ speed up or equivalently a compute saving of $38\%$. 
  }
  \label{fig:flops-crps}
\end{figure}

\section{Discussion and Future Work}
We have introduced Synthetic Data Distillation as a way to accelerate the pre-training of time series foundation models with synthetic data. 
SDD motivates future work on: (i) numerical experiments for a wider range of synthetic data generators, (ii) whether that compute saving persists at the scale and data mixtures of production TSFM pre-training runs, where synthetic data is only part of the corpus, (iii) applications of SDD beyond TSFM to Tabular Foundation models \cite{hollmann2023tabpfn, Hollmann2025}.

\bibliographystyle{assets/plainnat}
\bibliography{references}

\beginappendix

\section{Distilled Losses}
\label{app:losses}

This appendix derives each row of Table~\ref{tab:distilled-general}.

Fix a history $y_{0:t}$ and a horizon index $i \in \{1, \dots, h\}$. All expectations are under the
generator $\pi_\alpha$ and conditional on $y_{0:t}$; we assume $\mathbb{E}[Y_{t+i}^2] < \infty$, so
that every quantity below exists. Write $\mu$ and $s > 0$ for the conditional mean and standard
deviation of $Y_{t+i}$, and set
\begin{equation*}
z = \frac{\hat y_{t+i} - \mu}{s}, \qquad Z = \frac{Y_{t+i} - \mu}{s},
\end{equation*}
so that $Z$ has mean $0$ and variance $1$. Let $f_0$ and $F_0$ be its density and CDF and let
$m(z) = \int_z^\infty u\, f_0(u)\,du$ be its upper partial mean; since $\mathbb{E}[Z] = 0$ we also
have $\int_{-\infty}^z u\, f_0(u)\,du = -m(z)$. The generator is Gaussian in all of our experiments,
in which case $f_0 = \phi$ and $F_0 = \Phi$ are the standard normal density and CDF and
$m(z) = \phi(z)$.

Every loss in Table~\ref{tab:distilled-general} is a sum over the horizon indices $i = 1, \dots, h$
of the scalar term shown, up to a factor $1/h$ when $\ell$ is a mean rather than a sum. Conditional
expectation is linear, so the distilled loss \eqref{eq:sdd-loss} aggregates the $\ell_{distill}$ below in
exactly the same way, and only the \emph{marginal} conditional law of each $Y_{t+i}$ is ever
required.

\paragraph{Squared error.} Expanding around $\mu$,
$\mathbb{E}[(\hat y_{t+i} - Y_{t+i})^2] = (\hat y_{t+i}-\mu)^2 - 2(\hat y_{t+i}-\mu)\,\mathbb{E}[Y_{t+i}-\mu] + \mathbb{E}[(Y_{t+i}-\mu)^2]$.
The middle term is zero and the last is $s^2$, which does not depend on $\theta$. The distilled loss
is squared error against the conditional mean, up to an additive constant.

\paragraph{Absolute error.} $|\hat y_{t+i} - Y_{t+i}| = s\,|z - Z|$. Splitting the integral at $z$,
\begin{equation*}
\mathbb{E}\,|z - Z| = \int_{-\infty}^{z} (z-u) f_0(u)\,du + \int_{z}^{\infty} (u-z) f_0(u)\,du
 = 2 m(z) + z\big(2F_0(z) - 1\big),
\end{equation*}
so $\ell_{distill} = s\left[\,2 m(z) + z\big(2F_0(z)-1\big)\right]$.

\paragraph{Pinball at level $\tau$.} The pinball loss is
$(Y_{t+i}-\hat y_{t+i})(\tau - \mathbf 1\{Y_{t+i} < \hat y_{t+i}\}) = s\,(Z-z)(\tau - \mathbf 1\{Z<z\})$.
Using $\mathbb{E}[Z-z] = -z$ and $\mathbb{E}[(Z-z)\mathbf 1\{Z<z\}] = -m(z) - z F_0(z)$, the
expectation is $\ell_{distill} = s[\,m(z) + z(F_0(z)-\tau)]$. Its derivative in $\hat y_{t+i}$ is
$F_0(z) - \tau$, since $m'(z) = -z f_0(z)$ cancels the $z f_0(z)$ term and
$dz/d\hat y_{t+i} = 1/s$. The second derivative is $f_0(z)/s \ge 0$, so $\ell_{distill}$ is convex and,
whenever $F_0$ is strictly increasing, is minimized at $\hat y_{t+i} = \mu + s F_0^{-1}(\tau)$, the
conditional $\tau$-quantile of $Y_{t+i}$ given $y_{0:t}$. Training against $\ell_{distill}$ therefore
targets the same quantiles as training against the realized pinball loss.

\paragraph{Cross-entropy over bins.} Since
$-\log \hat p(\mathrm{bin}(Y_{t+i})) = -\sum_k \mathbf 1\{Y_{t+i} \in \mathrm{bin}_k\} \log \hat p_k$,
linearity of expectation gives $\ell_{distill} = -\sum_k \mathbb{P}(Y_{t+i} \in \mathrm{bin}_k) \log \hat p_k$.
The distilled loss is the cross-entropy of the predicted bin distribution against the exact
conditional bin distribution, in place of the one-hot bin of a single realized draw.

\section{Tractable Distilling Losses}
\label{app:tractable}

Section~\ref{sec:method} names four families of synthetic generator whose conditional forecast law is
closed-form. This appendix writes out the distilled loss \eqref{eq:sdd-loss} for every one of
those families and every loss of Table~\ref{tab:distilled-general}.

\subsection{Closed Forms by Generator Family}
\label{app:closedforms}

The grid factorizes, which is what makes it small enough to tabulate. Fix a horizon index
$i \in \{1, \dots, h\}$ and write, all conditional on $y_{0:t}$ and under $\pi_\alpha$,
\begin{equation}
\label{eq:musfu}
\mu_i = \mathbb{E}\big[Y_{t+i}\big], \quad
s_i^2 = \operatorname{Var}\big(Y_{t+i}\big), \quad
F_i(x) = \mathbb{P}\big(Y_{t+i} \le x\big), \quad
U_i(x) = \mathbb{E}\big[(Y_{t+i}-x)^{+}\big],
\end{equation}
where $(a)^+ = \max(a,0)$ and $U_i$ is the \emph{upper partial expectation}. The generator enters the
distilled loss only through the four functionals \eqref{eq:musfu}, and the loss only through how it
combines them.

\begin{proposition}[Distilled losses in terms of $(\mu, s, F, U)$]
\label{prop:musfu}
Fix $i$, drop it from the notation, and write $\hat y = \hat y_{t+i}$. Suppose
$\mathbb{E}[Y^2 \mid y_{0:t}] < \infty$. Then the distilled terms of Table~\ref{tab:distilled-general}
are
\begin{align}
\text{squared error:}\quad & \ell_{distill}(\hat y) = (\hat y - \mu)^2 + s^2, \label{eq:d-se}\\
\text{absolute error:}\quad & \ell_{distill}(\hat y) = (\hat y - \mu) + 2\,U(\hat y), \label{eq:d-ae}\\
\text{pinball at level } \tau:\quad & \ell_{distill}(\hat y) = U(\hat y) + (1-\tau)(\hat y - \mu), \label{eq:d-pin}\\
\text{cross-entropy over bins:}\quad & \ell_{distill}(\hat p) = -\sum_{k=1}^{K} \big(F(e_k) - F(e_{k-1})\big) \log \hat p_k, \label{eq:d-ce}
\end{align}
where $-\infty = e_0 < e_1 < \dots < e_K = +\infty$ are the bin edges, so that
$\mathrm{bin}_k = (e_{k-1}, e_k]$.
\end{proposition}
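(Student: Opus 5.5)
The plan is to derive each identity directly from linearity of conditional expectation together with two elementary pointwise identities for real numbers. These are $|a| = a + 2a^{-}$ (equivalently $|a| = -a + 2a^{+}$) and $a\,\mathbf 1\{a<0\} = -a^{-}$, where $a^{-}=\max(-a,0)$. We work conditionally on $y_{0:t}$ throughout. The assumption $\mathbb{E}[Y^2\mid y_{0:t}]<\infty$ guarantees that $\mu$, $s^2$ and $U(\hat y)$ are finite, since $(Y-\hat y)^+\le |Y|+|\hat y|$. Hence every expectation below is of an integrable variable, and splitting sums is legitimate.

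\textbf{Squared error.} Expand $(\hat y - Y)^2 = (\hat y-\mu)^2 - 2(\hat y-\mu)(Y-\mu) + (Y-\mu)^2$ and take conditional expectations. The cross term vanishes because $\mathbb{E}[Y-\mu]=0$, and the last term is $s^2$. This gives \eqref{eq:d-se}, exactly as in Appendix~\ref{app:losses}.

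\textbf{Absolute error and pinball.} Set $a = Y-\hat y$.

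For absolute error, use $|a| = -a + 2a^{+}$. Taking expectations gives
\[
\mathbb{E}|Y-\hat y| = (\hat y-\mu) + 2\,\mathbb{E}[(Y-\hat y)^+] = (\hat y - \mu) + 2U(\hat y),
\]
which is \eqref{eq:d-ae}.

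For pinball, write
\[
a(\tau - \mathbf 1\{a<0\}) = \tau a + a^{-} = \tau a + (a^{+} - a) = a^{+} - (1-\tau)a,
\]
using $a^{-} = a^{+} - a$. Taking expectations gives
\[
U(\hat y) - (1-\tau)(\mu - \hat y) = U(\hat y) + (1-\tau)(\hat y-\mu),
\]
which is \eqref{eq:d-pin}.

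As a consistency check, one can verify agreement with Table~\ref{tab:distilled-general}. Substituting $U(\hat y) = s\,\mathbb{E}[(Z-z)^+] = s\,[m(z) - z(1-F_0(z))]$ recovers the table entries $s[2m(z)+z(2F_0(z)-1)]$ and $s[m(z)+z(F_0(z)-\tau)]$. The only subtle point here is the convention at ties $\{Y=\hat y\}$. It is harmless because $a\,\mathbf 1\{a<0\}$ and $-a^-$ agree there (both are $0$), so no continuity assumption on $F$ is needed.

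\textbf{Cross-entropy.} Write $-\log\hat p(\mathrm{bin}(Y)) = -\sum_{k=1}^K \mathbf 1\{Y\in(e_{k-1},e_k]\}\log\hat p_k$. This is a finite sum with $\hat p$ fixed given $y_{0:t}$, so linearity yields coefficients $\mathbb{P}(e_{k-1}<Y\le e_k) = F(e_k)-F(e_{k-1})$, with the conventions $F(-\infty)=0$ and $F(+\infty)=1$. This gives \eqref{eq:d-ce}.

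I expect no real obstacle. The step needing the most care is the pinball rearrangement, specifically getting the sign of $(1-\tau)(\hat y-\mu)$ right and handling the tie convention. I would verify it by checking that the derivative in $\hat y$ equals $F(\hat y)-\tau$. This works because $U'(\hat y) = -(1-F(\hat y))$, so the derivative is $-(1-F(\hat y)) + (1-\tau) = F(\hat y)-\tau$, matching Appendix~\ref{app:losses}.
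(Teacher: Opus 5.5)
Your proof is correct and takes essentially the same route as the paper: the expansion around $\mu$, linearity over bin indicators, and the pointwise identities $|a| = -a + 2a^{+}$ and $a^{-} = a^{+} - a$. These are the paper's $|a| = a + 2(-a)^{+}$ and $(\hat y - Y)^{+} = (Y-\hat y)^{+} - (Y-\hat y)$ with the sign of $a$ flipped. Your integrability and tie remarks, and your starting point from the indicator form of the pinball loss rather than the $\tau(\cdot)^{+} + (1-\tau)(\cdot)^{+}$ form, are harmless refinements of the same argument.
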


\begin{proof}
Equation \eqref{eq:d-se} is Appendix~\ref{app:losses}. For \eqref{eq:d-ae}, put $a = \hat y - Y$ and use
$|a| = a + 2(-a)^{+}$, so $\mathbb{E}|\hat y - Y| = (\hat y - \mu) + 2\,\mathbb{E}[(Y - \hat y)^{+}]$.
For \eqref{eq:d-pin}, the pinball loss is
$\tau (Y-\hat y)^{+} + (1-\tau)(\hat y - Y)^{+}$, and $(\hat y - Y)^{+} = (Y - \hat y)^{+} - (Y - \hat y)$,
so its expectation is $U(\hat y) - (1-\tau)(\mu - \hat y)$. Equation \eqref{eq:d-ce} is linearity of
expectation applied to the indicator of each bin.
\end{proof}

Proposition~\ref{prop:musfu} is equivalent to Table~\ref{tab:distilled-general}: writing
$U(\hat y) = s\big[m(z) - z\big(1 - F_0(z)\big)\big]$ with $z = (\hat y - \mu)/s$ recovers the
standardized forms there. Two consequences are worth noting. Only $\mu$ enters
\eqref{eq:d-se} up to the $\theta$-free constant $s^2$, so the distilled squared error needs only the
conditional mean to train. And $\ell_{distill}$ in \eqref{eq:d-pin} has derivative $F(\hat y) - \tau$ in
$\hat y$, so it is minimized at the exact conditional $\tau$-quantile whatever the conditional law is:
distillation does not change what the quantile head is asked to learn.

Table~\ref{tab:tractable} completes the picture. Panel (a) gives $(\mu_i, s_i^2, F_i, U_i)$ for each
of the four tractable families; panel (b) evaluates \eqref{eq:d-se}--\eqref{eq:d-ce} for each
conditional law that arises. To read off any one of the sixteen distilled losses, take the family's row
in (a) and the loss's row in (b) under the matching law.

\begin{table}[p]
  \centering
  \footnotesize
  \setlength{\tabcolsep}{4pt}
  \caption{Exact distilled losses for the tractable generator families of
  Section~\ref{sec:method}. \emph{(a)} the conditional forecast law of $Y_{t+i}$ given $y_{0:t}$,
  summarized by the four functionals \eqref{eq:musfu}; $\delta_i$ is the elapsed time from $t$ to
  $t+i$. \emph{(b)} the distilled loss $\ell_{distill}$ of Proposition~\ref{prop:musfu} for each law that
  arises in (a), with $\phi$ and $\Phi$ the standard normal density and CDF. Every family in (a) is
  Gaussian except the log-linear SDE, which is lognormal, and the general i.n.i.d.\ case, which is
  whatever the noise law makes it.}
  \label{tab:tractable}

  \textbf{(a) Conditional forecast law $Y_{t+i} \mid y_{0:t}$}\\[2pt]
  \begin{tabular}{@{}p{0.155\linewidth}p{0.275\linewidth}p{0.10\linewidth}p{0.415\linewidth}@{}}
    \toprule
    Family & Generator & Law & Conditional moments and distribution\\
    \midrule
    Linear Gaussian state space (ARIMA, DLM)
      & $x_{u+1} = A_u x_u + b_u + w_u$, $w_u \sim \mathcal{N}(0, Q_u)$;\newline
        $y_u = c_u^\top x_u + d_u + v_u$, $v_u \sim \mathcal{N}(0, R_u)$
      & Gaussian
      & $\mu_i = c_{t+i}^\top m_i + d_{t+i}$, \quad $s_i^2 = c_{t+i}^\top P_i\, c_{t+i} + R_{t+i}$,\newline
        with $m_i = A_{t+i-1} m_{i-1} + b_{t+i-1}$ and\newline
        $P_i = A_{t+i-1} P_{i-1} A_{t+i-1}^\top + Q_{t+i-1}$ run forward from the Kalman filter
        $(m_0, P_0) = (\hat x_{t\mid t}, \hat P_{t \mid t})$\\
    \addlinespace
    Gaussian process (KernelSynth, GP)
      & $y_u = g(u) + f(u) + v_u$,\newline
        $f \sim \mathcal{GP}(0, k)$, $v_u \sim \mathcal{N}(0, \sigma^2)$
      & Gaussian
      & $\mu_i = g(t+i) + \kappa_i^\top K^{-1}\big(y_{0:t} - g_{0:t}\big)$,\newline
        $s_i^2 = k(t{+}i, t{+}i) - \kappa_i^\top K^{-1} \kappa_i + \sigma^2$,\newline
        with $K = k(0{:}t, 0{:}t) + \sigma^2 I$ and $\kappa_i = k(t{+}i,\, 0{:}t)$\\
    \addlinespace
    Linear SDE (Ornstein--Uhlenbeck)
      & $dY_u = \kappa\,(\eta - Y_u)\,du + \sigma\,dW_u$
      & Gaussian
      & $\mu_i = \eta + (y_t - \eta)\,e^{-\kappa \delta_i}$, \quad
        $s_i^2 = \dfrac{\sigma^2}{2\kappa}\big(1 - e^{-2\kappa \delta_i}\big)$\\
    \addlinespace
    Log-linear SDE (geometric Brownian motion)
      & $dY_u = \nu\,Y_u\,du + \sigma\,Y_u\,dW_u$
      & Lognormal
      & $\log Y_{t+i} \sim \mathcal{N}(a_i, b_i^2)$ with $a_i = \log y_t + (\nu - \tfrac12 \sigma^2)\delta_i$
        and $b_i^2 = \sigma^2 \delta_i$;\newline
        $\mu_i = e^{a_i + b_i^2/2}$, \quad $s_i^2 = \mu_i^2\big(e^{b_i^2} - 1\big)$\\
    \addlinespace
    i.n.i.d.\ $y_u = g(u, \epsilon_u)$ (ForecastPFN, Sawtooth, Step, Spikes)
      & $g$ known and deterministic, $(\epsilon_u)_u$ independent
      & noise law
      & $Y_{t+i}$ is independent of $y_{0:t}$, so $F_i$ is the marginal law of $g(t{+}i, \epsilon_{t+i})$ and
        $U_i(x) = \int_x^{\infty}\big(1 - F_i(u)\big)\,du$.\newline
        Gaussian when $g(u, \epsilon) = m_u + \sigma_u \epsilon$ with $\epsilon \sim \mathcal{N}(0,1)$,
        giving $\mu_i = m_{t+i}$ and $s_i = \sigma_{t+i}$\\
    \bottomrule
  \end{tabular}

  \vspace{8pt}
  \textbf{(b) Distilled loss $\ell_{distill}$ at horizon index $i$, by conditional law}\\[2pt]
  \begin{tabular}{@{}p{0.115\linewidth}p{0.275\linewidth}p{0.315\linewidth}p{0.235\linewidth}@{}}
    \toprule
    Loss $\ell$
      & Gaussian $\mathcal{N}(\mu, s^2)$\newline $z = (\hat y - \mu)/s$
      & Lognormal $(a, b^2)$\newline $d = (\log \hat y - a)/b$
      & General, via \eqref{eq:musfu}\\
    \midrule
    Squared error
      & $(\hat y - \mu)^2 + s^2$
      & $(\hat y - \mu)^2 + \mu^2\big(e^{b^2} - 1\big)$
      & $(\hat y - \mu)^2 + s^2$\\
    \addlinespace
    Absolute error
      & $s\big[\,2\phi(z) + z\big(2\Phi(z) - 1\big)\big]$
      & $(\hat y - \mu) + 2\big[\mu\,\Phi(b - d) - \hat y\,\Phi(-d)\big]$
      & $(\hat y - \mu) + 2\,U(\hat y)$\\
    \addlinespace
    Pinball at level $\tau$
      & $s\big[\,\phi(z) + z\big(\Phi(z) - \tau\big)\big]$
      & $\mu\,\Phi(b - d) - \hat y\,\Phi(-d) + (1-\tau)(\hat y - \mu)$
      & $U(\hat y) + (1-\tau)(\hat y - \mu)$\\
    \addlinespace
    Cross-entropy over bins
      & $-\sum_k \big[\Phi(z_k) - \Phi(z_{k-1})\big]\log \hat p_k$,\newline $z_k = (e_k - \mu)/s$
      & $-\sum_k \big[\Phi(d_k) - \Phi(d_{k-1})\big]\log \hat p_k$,\newline $d_k = (\log e_k - a)/b$
      & $-\sum_k \big[F(e_k) - F(e_{k-1})\big]\log \hat p_k$\\
    \bottomrule
  \end{tabular}
\end{table}

The lognormal column of Table~\ref{tab:tractable}(b) follows from
$\mathbb{P}(Y > x) = \Phi(-d)$ and $\mathbb{E}[Y \mathbf{1}\{Y > x\}] = \mu\,\Phi(b - d)$, the standard
partial moment of a lognormal, so that $U(x) = \mu\,\Phi(b-d) - x\,\Phi(-d)$. Substituting this into
Proposition~\ref{prop:musfu} gives the remaining three entries. 
The same two ingredients - a CDF and
an upper partial expectation - are all that a new generator family has to supply in order to join
the table.

\subsection{Which Published Synthetic Data Generators are Tractable}
\label{app:generators}

Appendix~\ref{app:closedforms} assumes the conditional law is available. Table~\ref{tab:generators}
records which of the synthetic generators used in recent TSFM pre-training supply it, and which row of
Table~\ref{tab:tractable}(a) each one instantiates given its drawn hyperparameters $\alpha$; for those
rows the distilled losses can be read straight off Table~\ref{tab:tractable}(b).

Two rows need a word. The regime-switching Ornstein--Uhlenbeck generator is closed-form only once the
regime path is recorded at generation time, as Section~\ref{sec:method} describes for the conditional
moments; conditioning on that path in addition to $y_{0:t}$ leaves Proposition~\ref{prop:rb} intact, by
the same tower-property argument. And for the last three rows, where no closed form is available, the
fallbacks of Section~\ref{sec:method} apply: a Monte Carlo estimate of $\ell_{distill}$, or the status
quo loss.

\begin{table}[ht]
  \centering
  \footnotesize
  \setlength{\tabcolsep}{6pt}
  \caption{Synthetic generators used to pre-train TSFMs, and where each lands in
  Table~\ref{tab:tractable}(a). The first three rows are closed-form; the last three are not. Rows one
  to five are the TempoPFN generators \citep{moroshan2025tempopfn}, which subsume KernelSynth
  \citep{ansari2024chronos} and ForecastPFN \citep{dooley2023forecastpfn}; the last row is from Chronos
  \citep{ansari2024chronos}.}
  \label{tab:generators}
  \begin{tabular}{@{}ll@{}}
    \toprule
    Generators & Family in Table~\ref{tab:tractable}(a)\\
    \midrule
    KernelSynth, Gaussian Process & Gaussian process\\
    regime-switching Ornstein--Uhlenbeck & linear SDE, given the recorded regime path\\
    ForecastPFN, Sawtooth, SineWave, Spikes, StepFunction, Anomaly & i.n.i.d.\ $y_u = g(u, \epsilon_u)$\\
    \addlinespace
    CauKer, at non-root channels & none: nonlinear map of GPs, sampling required\\
    audio-inspired (four generators) & none: procedural simulation, sampling required\\
    TSMixup & none: real-data augmentation, law unavailable\\
    \bottomrule
  \end{tabular}
\end{table}

\section{Experimental Details}
\label{app:details}

\paragraph{Generator.} Every series is a single Gaussian process draw on the integer grid
$u = 0, 1, \dots, T-1$. Series are produced in generation chunks of $128$: one kernel $\kappa$ is
drawn uniformly from ten choices per chunk and shared by the whole chunk, while the kernel
hyperparameters $\alpha$, the mean function $g(u) = a u + c$ and the noise realization are drawn per
series. The
per-series marginal over kernels is therefore uniform, but $128$ consecutive series share a kernel,
and the corpus is read in generation order rather than shuffled, so a batch of $B \le 128$
carries a single kernel type and only the $B = 256$ and $B = 1024$ levels of the batch sweep mix
kernels within a batch. Elsewhere kernels vary across steps, not within a step.
Writing
$K_{\kappa\alpha} = \big(k_{\kappa\alpha}(u,v)\big)_{u,v < T}$ for the kernel matrix and
$g = \big(g(u)\big)_{u<T}$, the generating distribution is exactly
\begin{equation}
\label{eq:gp-gen}
y_{0:T-1} \mid (\kappa, \alpha, a, c) \;\sim\; \mathcal{N}\big(g,\; K_{\kappa\alpha} + \tilde\sigma^2 I_T\big),
\qquad \tilde\sigma^2 = \sigma^2 + 10^{-4},
\end{equation}
realized as $y = g + L\xi$ with $LL^\top = K_{\kappa\alpha} + \tilde\sigma^2 I_T$ and
$\xi \sim \mathcal{N}(0, I_T)$. Here $\sigma$ is the observation-noise standard deviation and
$10^{-4}$ is a Cholesky jitter; both are folded into the covariance rather than added afterwards, so
\eqref{eq:gp-gen} is the law the distilled losses condition on. Channels share the kernel and, in the
runs reported here, are drawn independently.

The ten kernels are RBF, Mat\'ern-1/2 (equivalently Ornstein--Uhlenbeck), Mat\'ern-3/2, Mat\'ern-5/2,
periodic (exponentiated sine squared), rational quadratic, locally periodic (RBF times periodic),
linear (dot product), degree-two polynomial, and a three-component spectral mixture. Their
hyperparameters are drawn per series, all uniform: lengthscale $U(5,50)$; output scale $U(0.5,2)$;
period $U(8,64)$; rational-quadratic shape $U(0.5,4)$; periodic lengthscale $U(0.5,2)$; polynomial
offset $U(0,2)$; spectral-mixture weights $U(0.1,1)$, frequencies $U(0.005,0.2)$ and scales
$U(0.001,0.02)$. The mean is linear with probability $0.5$, with slope $a \sim U(-0.02,0.02)$, and
constant ($a = 0$) otherwise; the intercept is $c \sim U(-1,1)$ in both cases.

\paragraph{Distilled losses used.} Equation \eqref{eq:gp-gen} is row two of
Table~\ref{tab:tractable}(a), so the conditional law of the future given the history is Gaussian and
available in closed form. Split the index set at $t$ into the context $c = 0{:}t$ and the horizon
$q = t{+}1{:}t{+}h$, and abbreviate $\widetilde K_{cc} = K_{cc} + \tilde\sigma^2 I$. Then
\begin{equation}
\label{eq:gp-post}
Y_q \mid y_{0:t} \sim \mathcal{N}(\mu, \Sigma), \quad
\mu = g_q + K_{qc}\widetilde K_{cc}^{-1}\big(y_{0:t} - g_c\big), \quad
\Sigma = K_{qq} + \tilde\sigma^2 I - K_{qc}\widetilde K_{cc}^{-1}K_{cq},
\end{equation}
and we write $\mu_i$ and $s_i^2 = \Sigma_{ii}$ for the marginals of \eqref{eq:gp-post}, $i = 1, \dots, h$. Only these
marginals are needed. Substituting them into the Gaussian column of Table~\ref{tab:tractable}(b)
gives the distilled objective for each of the two head types. Every result we report uses
the quantile head; the point head is shown because the method applies unchanged to both:
\begin{align}
\text{point head:}\quad & \ell_{distill} = \frac{1}{h}\sum_{i=1}^{h} \Big[\big(\hat y_{t+i} - \mu_i\big)^2 + s_i^2\Big], \label{eq:gp-mse}\\
\text{quantile head:}\quad & \ell_{distill} = \frac{1}{hK}\sum_{i=1}^{h}\sum_{k=1}^{K} s_i\Big[\phi(z_{ik}) + z_{ik}\big(\Phi(z_{ik}) - \tau_k\big)\Big],
\quad z_{ik} = \frac{\hat y_{t+i,\tau_k} - \mu_i}{s_i}, \label{eq:gp-pin}
\end{align}
with $\tau_1, \dots, \tau_K$ the $K = 9$ deciles. The Status Quo arms replace \eqref{eq:gp-mse} and
\eqref{eq:gp-pin} by the realized squared error and the realized pinball loss against
$y_{t+1:t+h}$. The $s_i^2$ term in \eqref{eq:gp-mse} does not depend on $\theta$ and so does not enter
the gradient; we keep it because it removes the leading level difference between the two arms'
logged losses, making the distilled loss an unbiased estimate of the same quantity as the Status Quo
loss. A smaller weighting artifact survives it, which we describe next.

Under teacher forcing the same formulae apply at $h = 1$ with $t$ running along the series. There the
conditioning is available for free from the draw: with $L$ the lower-triangular Cholesky factor of
$K_{\kappa\alpha} + \tilde\sigma^2 I$, $\mu = y_t - L_{tt}\,\xi_t$ and $s = L_{tt}$, so no per-step
solve is needed.

Within a head, the two arms are scored by identical code. The quantile arms are scored in the
model's normalized space; the point arms are de-normalized and then re-normalized by a second,
detached per-series mean and standard deviation. Held-out metrics are therefore comparable across
arms. The logged training objectives of the CPM pair are not exactly comparable even after the
$s_i^2$ correction above, because the per-series weight they divide by is a standard deviation
measured over the full series, including the masked span: it is inflated by the span's own realized
excursion and so correlates with the Status Quo arm's squared residual while leaving the distilled
arm's constant variance term untouched. The residual level difference is small, but it is not zero,
which is why every cross-arm number we report is a held-out metric rather than a training objective.

\paragraph{Model.} Every result in this paper uses \textbf{Toto-2} \citep{toto2}, a next-patch
predictor at patch length $P = 32$: the patch at index $k$ predicts patch $k+1$, so the objectives
\eqref{eq:gp-mse}--\eqref{eq:gp-pin} and the held-out metrics below apply without modification.

The five architectures are rebuilt verbatim from the published configurations, at
$4.1$M, $21.9$M, $312.7$M, $1041.0$M and $2454.3$M parameters, and trained from random
initialization rather than from released weights, so that both arms of a comparison start from the
same untrained model. Each interleaves causal time attention over patches with variate attention
across channels, and is unit-scaled, its parameters initialized at unit magnitude with the scaling
carried in the forward pass. 
The point-estimate arm reads the median
knot of the model's own output; the quantile arm reads the nine deciles directly.

When a contiguous patch mask is applied, the masked entries are hidden from the model's scaler, so
the values to be predicted cannot leak into the normalization statistics, and the resulting location and scale bypass the trunk. The quantile arm's deciles are emitted, and scored, in that normalized space. This is what makes the objective comparison well posed: the two arms of a run differ only in the target their loss is taken against, and are evaluated by identical held-out metric code.

\paragraph{Training.} AdamW, weight decay $10^{-4}$, gradient-norm clipping at $1.0$, in bf16
mixed precision, in every run, at a learning rate of
$10^{-5}$, warmed up linearly over the first $1{,}000$ steps and then decayed by a cosine schedule to the end of training. 
Full Toto-2 pre-training \cite{toto2} used a unit-scaled optimizer that multiplies the learning rate per parameter by $1/\sqrt{\mathrm{fan\text{-}in}}$, a factor of $16$ at $4$M rising to $45$ at $2.5$B, so its published rates are $O(10^{-2})$ and are not comparable with a plain AdamW rate. We train under plain AdamW, which applies the same relative update per step at every width, at a rate
that is stable across all five sizes. Both arms of every comparison share the optimizer exactly, so
the choice sets the level of the curves in Figure~\ref{fig:flops-crps} but not the gap between them.
Contiguous patch masking (CPM) masks one contiguous span of patches and
scores the loss on the masked entries; the span start and length are resampled at every step. The
length is uniform on $\{1, \dots, \min(16, \lfloor 0.4 N \rfloor, N-1)\}$ and the start is uniform
over the positions that leave a prefix of at least one unmasked patch, so at $T = 512$ and $P = 32$,
i.e.\ $N = 16$ patches, the span is one to six patches long and never begins at patch $0$. One span
is drawn per optimizer step and shared by every series and every channel in the batch. Teacher
forcing instead scores every next-patch prediction along the series.

\paragraph{Regimes.} Every run uses one channel ($C = 1$) and series length $T = 512$ unless a sweep
varies it, and there are two studies.

The \emph{Toto-2 streaming study} backs Section~\ref{sec:experiments} and Appendix~\ref{app:tf}.
It runs $300{,}000$ optimizer steps at $B = 16$ and $\sigma = 0.25$, drawn in order from a corpus of
$6{,}000{,}000$ series, so the $4{,}800{,}000$ trajectories it consumes are each used once and none
is revisited. All five Toto-2 sizes are trained under both objectives at ten paired seeds, with an
evaluation every $50$ steps; the contiguous-patch-masking arms are those of
Section~\ref{sec:experiments} and the teacher-forcing arms those of Appendix~\ref{app:tf}.
Cumulative training compute is estimated as $6ND$ with $N$ the measured
parameter count and $D$ the patch tokens consumed, $B \cdot C \cdot T / P = 256$ per step; the
attention term is dropped, being well under a percent at a $16$-patch sequence.

The \emph{factor sweeps} of Appendix~\ref{app:factors} are a separate, shorter study on the same
generator and the Toto-2 $313$m architecture, so their levels are comparable with one another but
not with Section~\ref{sec:experiments}. The noise and sequence-length sweeps run $100{,}000$ steps
at $B = 256$, over $\sigma \in \{0.1, 0.25, 0.5, 1, 5, 10\}$ and
$T \in \{128, 256, 512, 1024\}$ respectively, everything else held at the centre
($\sigma = 0.25$, $T = 512$), which the two share as a single run.

The batch sweep, $B \in \{64, 128, 256, 512\}$, runs $50{,}000$ steps at every level, so all four
share one schedule and only $B$, and the $50{,}000 \cdot B$ series it implies, vary. Fixing the
\emph{data} instead --- one pass over a common corpus, so a level runs $1/B$ as many steps --- would
confound batch size with update count, which Appendix~\ref{app:factors} quantifies. Every level of
every sweep draws from a $26{,}000{,}000$-series corpus and consumes at most $25.6$M of it, so no
trajectory is revisited. The learning rate is held at $10^{-5}$ and not scaled with $B$; both arms of
a level share it, so each reported speed-up is a like-for-like comparison.

The two masking schemes share a model family, a metric, a budget and a seed set, so
Section~\ref{sec:experiments} and Appendix~\ref{app:tf} may be read against each other. The factor
sweeps may not: they run to a third of the budget, and the batch sweep to a sixth, so their numbers
should be read within that study.

\paragraph{Metrics.} \emph{Held-out} throughout means a validation set drawn fresh from the same
generator, using generator seeds disjoint from those that produced the training data, rather than a
held-out split of the training corpus.
Next-token CRPS is the headline metric: $2/K$ times the sum of the pinball losses over the $K = 9$
deciles, in the model's normalized space, evaluated on the one-patch-ahead prediction of a single
\emph{unmasked} teacher-forcing pass, in which the model sees the whole series and every output
position is scored against the patch that follows it. It is the only metric reported in this paper,
and it is scored identically for the contiguous-patch-masking and teacher-forcing arms, neither of
which is evaluated under its own training-time masking.
Section~\ref{sec:experiments} calls the next-token CRPS the validation loss; we write \emph{gap} for
the paired difference between arms, SDD minus Status Quo, at a given step, so a negative gap favors
SDD. Every metric here is scored by identical code for both arms, so a cross-arm difference is a
quality comparison.

\paragraph{Seeding.} Ten seeds per configuration in the streaming study, three in the factor
sweeps. A seed enters an average only if both of its arms
completed the full step budget, a partially trained arm's final value being a mid-training loss.
Within a seed the two arms share an initialization
and a data order, so the comparison is paired; the gap is computed per seed and then averaged.
The seed sets only the global PyTorch generator, so what varies across seeds is the initialization
and the CPM span draws; the training corpus is a fixed cache and is read in the same order by every
seed. The error bars therefore carry no data-sampling variability and understate the spread that
retraining on a freshly drawn corpus would show.

\section{Additional Results}
\label{app:extra}

The two studies below reuse the setups of Appendix~\ref{app:details}: the Toto-2 streaming study
under teacher forcing rather than contiguous patch masking, and the factor sweeps.
Appendix~\ref{app:tf} reports the \emph{gap} as defined in Appendix~\ref{app:details}, so a negative
number favors SDD; Appendix~\ref{app:factors} reports speed-ups instead.

\subsection{Teacher Forcing, Streaming}
\label{app:tf}

The setting is exactly that of Section~\ref{sec:experiments} -- the same five Toto-2 architectures,
the same quantile head over the nine deciles, the same $300{,}000$ streaming steps at $B = 16$ and
$\sigma = 0.25$, and the same ten paired seeds -- with the training objective scored under teacher
forcing at every next-patch position rather than on a contiguous masked span. The metric is the same
held-out next-token CRPS, so the numbers below are directly comparable with those of
Section~\ref{sec:experiments}.

Figure~\ref{fig:tf} does not repeat the layout of Figure~\ref{fig:flops-crps}, because at this effect
size that layout cannot show the result. Drawn on one shared axis, the five sizes span a range of
CRPS a hundred times the distance between the two arms, and the pairs are separated by less than the
width of the lines drawing them: the two frontiers coincide and the figure reads as a null result.
Splitting the sizes into their own panels is what makes the comparison visible, since each panel can
then be cropped to one model's own converged band.

SDD still wins every one of the fifty paired seeds, but the effect is an order of magnitude smaller:
the converged gap runs from $-0.11\%$ to $-0.41\%$, against $-0.90\%$ to $-2.16\%$ under contiguous
patch masking, and the speed-up is $1.19\times$ to $1.33\times$ rather than $1.6\times$ to
$1.85\times$. The ordering is what the variance-reduction account of Section~\ref{sec:theory}
predicts. Teacher forcing scores every position of the series at each step, so a single gradient
already averages the sampling noise of the realized future over all $T$ time points; contiguous patch
masking scores only the masked span, at one to six patches of sixteen, so its per-step gradient is
built from far fewer target draws and carries correspondingly more of the noise SDD removes. Where
gradients are already well averaged, there is less variance left to take out -- the same mechanism
the batch-size panel of Appendix~\ref{app:factors} exhibits along a different axis.

Teacher forcing also reaches a lower absolute CRPS than masking at every size, which is expected:
the unmasked next-token metric matches its training objective, while the masked-span arms are scored out of the regime they train in.

\begin{figure}[ht]
  \centering
  \includegraphics[width=\linewidth]{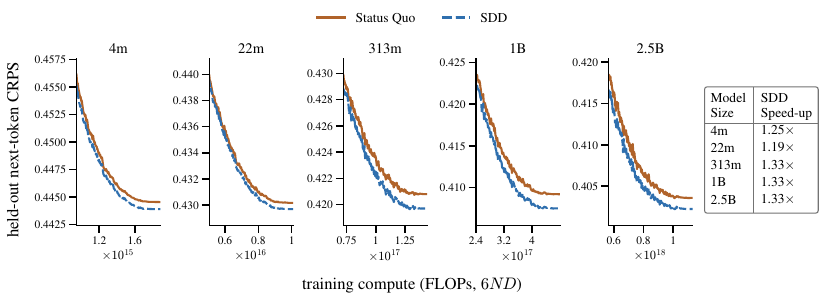}
  \caption{Teacher forcing rather than contiguous patch masking, one panel per architecture:
  held-out next-token CRPS against cumulative training compute, means over ten paired seeds. Colour
  carries the training objective here, not the model size, which is the panel title. Each panel is
  drawn on its own compute and CRPS scale and covers the second half of training, the range over
  which both arms have converged; \emph{the panels are therefore not comparable with one another},
  and neither their vertical scales nor their compute axes are shared. The table at the right gives
  each size's speed-up, defined as in Figure~\ref{fig:flops-crps}. The gap SDD opens early is partly closed by
  Status Quo by the end of training, which is why the curves converge towards the right of every
  panel and why the converged gaps below are smaller than the separation visible mid-panel.}
  \label{fig:tf}
\end{figure}

\subsection{Factor Sweeps}
\label{app:factors}

Section~\ref{sec:experiments} varies the model size and holds everything else fixed.
Table~\ref{fig:factors} varies each of the other three factors in turn, on the Toto-2 $313$m
quantile head, and reports each level as an SDD \emph{speed-up}: the compute Status Quo spends to
reach its final held-out next-token CRPS, over the compute SDD needs to reach the same value. A
speed-up of $1.3\times$ means SDD arrives at Status Quo's converged accuracy on three quarters of
the budget. We report the speed-up rather than the converged gap because it is what a practitioner
spends: near convergence the loss curve is flat, so a gap of a few tenths of a percent is still
worth tens of percent of the compute.

The noise and sequence-length columns share an operating point --- $B = 256$ for $100{,}000$ steps
at $\sigma = 0.25$ --- and meet at a common level, $\sigma = 0.25$ and $T = 512$, which is one run
and appears in both at $1.32\times$. The batch column holds the step count at $50{,}000$ instead, so
that only $B$ and the data it implies vary, and is read down its own levels. Three paired seeds per
level.

\emph{Observation noise} costs SDD little: $1.32\times$ at $\sigma \le 0.25$, easing to $1.22$--$1.23
\times$ from $\sigma = 1$ and then flat out to $\sigma = 10$. The decline is real but small over two
decades. Observation noise is irreducible for both arms, so raising it inflates the CRPS each can
attain faster than the difference between them; but it also flattens both curves, so the shrinking
accuracy gap still buys a similar share of the compute.

\emph{Sequence length} works the other way, rising monotonically with context: $1.26\times$ at
$T = 128$ to $1.34\times$ at $1024$. Longer contexts make the conditional distribution sharper and
its moments more informative, so the distilled target carries more signal per step.

\emph{Batch size} falls monotonically, $1.26\times$ at $B = 64$ to $1.13\times$ at $512$, and is the
factor the theory speaks to most directly. SDD removes the per-sample sampling noise of the realized
future, but averaging over a batch already suppresses that noise by roughly $B$, so the larger the
batch the less there is left to remove. The variance reduction is worth most where gradients are
noisiest.

Holding the step count fixed here rather than the data is deliberate. Under equal data --- one pass
over a common corpus, so a level runs $1/B$ as many steps --- the same effect looks like a collapse
rather than a decay, from $-3.5\%$ at $B = 64$ to $-0.01\%$ at $B = 1024$. Most of that is update
count, not batch size: $B = 256$ takes a quarter of the steps $B = 64$ does. Fixing the steps
isolates the batch effect, and what survives is the milder decline above.

\begin{table}[ht]
  \centering
  \includegraphics[width=\linewidth]{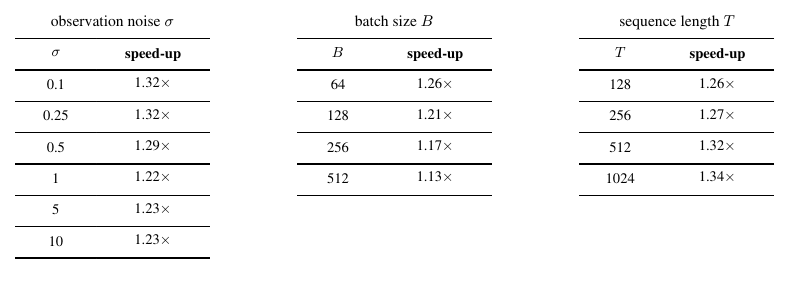}
  \caption{Toto-2 $313$m quantile head: the SDD speed-up at each level of three swept factors,
  defined as in Figure~\ref{fig:flops-crps} --- the compute Status Quo spends to reach its converged
  held-out next-token CRPS, over the compute SDD needs to reach the same value. Higher favors SDD.
  The noise and sequence-length columns run $B = 256$ for $100{,}000$ steps at $\sigma = 0.25$ and
  meet at $\sigma = 0.25$, $T = 512$; the batch column runs $50{,}000$ steps per level so that only
  the batch varies. Each column is read down its own levels. Every level consumes $25.6$M of a
  $26$M-series corpus, so no trajectory is revisited.}
  \label{fig:factors}
\end{table}

\end{document}